\newcommand{\prob}[1]{\mathsf{Pr}\left( #1 \right)}
\newcommand{\remove}[1]{}
\newcommand{\comments}[1]{}
\newcommand{\qed}{\hfill $\square$}
\newtheorem{lemma}{Lemma}
\newtheorem{theorem}{Theorem}
\newtheorem{definition}{Definition}
\tikzset{
	buffer/.style={
		draw,
		shape border rotate=120,
		isosceles triangle,
		isosceles triangle apex angle=69,
		fill=white,
		node distance=10cm,
		minimum height=10em
	}
}
\title{Indexability and Rollout Policy for Multi-State Partially Observable Restless Bandits}
\author{
 \begin{tabular}{ccc}
   Rahul Meshram  & Kesav Kaza 
   \\
    Deptt. of Elect. Comm. Engg.    & Deptt. of Elecl.  Engg. 
    \\
    IIIT Allahabad       & Polytechnique Montreal 
    \\
    INDIA        &    CANADA  
  \end{tabular}
 }
\begin{document}

\maketitle

\begin{abstract}
  Restless multi-armed bandits with partially observable states has applications in communication systems, age of information and recommendation systems.   In this paper, we study multi-state partially observable restless bandit models. We consider three different models based on information observable to decision maker---1) no information is observable from actions of a bandit 2) perfect information from bandit is observable only for one action on bandit, there is a fixed restart state, i.e., transition occurs from all other states to that state 3) perfect state information is available to decision maker for both actions on a bandit and there are two  restart state for two actions.   We develop the structural properties. We also show a threshold type policy and indexability for model $2$ and $3.$ We present Monte Carlo (MC) rollout policy. We use it for whittle index computation in case of model $2.$ We obtain the concentration bound on value function in terms of horizon length and number of trajectories for MC rollout policy. We derive explicit index formula for model $3.$ We finally describe Monte Carlo rollout policy for model $1$ when it is difficult to show indexability.  We demonstrate the numerical examples using myopic policy, Monte Carlo rollout policy and Whittle index policy. We observe that Monte Carlo rollout policy is good competitive policy to myopic.  
  
\end{abstract}

\section{Introduction}
 Restless multi-armed bandits with partially observable states have been recently found applications in online recommendation systems \cite{Meshram15}, opportunistic communication systems \cite{Zhao08,Zhao07,LiuZhao10}, machine maintenance \cite{Abbou19}, age of information, \cite{Shao20}. Restless multi-armed bandits (RMABs) are class of sequential decision problem with multiple independent Markov processes which are coupled via number of independent process that are activated simultaneously, \cite{Whittle88}. In a partially observable model, states of Markov chains are not observable at time of decision making but signals are observable. The solution of RMAB are computationally challenging and known to be PSPACE Hard problem, \cite{Papadimitriou99}. In fact a popular heuristic Whittle index based policy have been studied and it has shown to be asymptotically optimal, \cite{Ouyang12}. The essential idea of index policy is to decouple the independent Markov processes (arms) by solving relaxed constrained problem with Lagrangian  method. Later one need to show indexability for each processes and has to provide computational method for index which maps the state of each process to a real number. The process (arm) with the highest index is played at each time instant.   
 
Most of RMAB problems with  partially observable states are studied for two state model with various assumptions on transition probabilities, reward structure and observation probabilities, \cite{Meshram15,Ny08,Zhao08,Zhao07,LiuZhao10,Ahmad09,Meshram18}. Much less  attention is given to more than two state model. Multi-state partially observable RMAB has been studied in \cite{Ouyang14,Wang13,Larranga18,Shao20}. In \cite{Ouyang14,Wang13},  the optimality of myopic policy is shown under specific model assumption for identical communication channels. 
In \cite{Shao20}, authors have proposed and analyzed greedy policy for age of information problem.  In \cite{Larranga18}, authors have studied a pilot allocation problem in wireless networks over partially observable fading channel with approximation on multi state model. Further, they analyzed index policy and asymptotic optimality is proved.  To derive obtain indexability, one require to study a single armed bandit model and it is partially observable Markov decision process (POMDP). The properties of POMDP are derived in \cite{Lovejoy87}.

In this paper, we study partially observable RMAB with  more than two state model. We consider three different models based on information observable to decision maker. In first model we study with no state is observable for any actions. In second model the decision maker can observe the perfect state for one of the actions. In third model we assume that decision maker observes perfect state for both actions. We obtain structural properties and discuss about indexability for these models.  We discuss simulation based MC rollout policy.  In first model, indexability is very difficult to obtain and hence use rollout policy. In second model, we show indexability but difficult to derive index, this motivated rollout policy based index computation method. We obtain the concentration bound for rollout policy with threshold type structure. In third model, we show indexability and derive explicit index formula. Finally we illustrate performance of proposed policy using numerical examples.

The paper is organized as follows. We present model descriptions and preliminaries   in Section~\ref{sec:Model}. The structural properties and indexability are developed in Section~\ref{sec:Structural-Prop}. Monte Carlo rollout policy is discussed in Section~\ref{sec:Monte-Carlo-Rollout}. Numerical examples and discussion are presented in Section~\ref{sec:numerical-discussion}.



\section{Model Description}
\label{sec:Model}
Consider $N$ partially observable restless multi-armed bandits, where 
$\mathbf{M}_{i}= \{\mathcal{S}_i, \mathcal{A}_i, \mathcal{P}_i, \mathcal{R}_i, \mathcal{O}_i, \mathcal{Q}_i, \beta\},$ $i=1,2,\cdots,N.$ 
Let $\mathcal{S}_i$ be the state space,  $\mathcal{S}_i = \{1,2, \cdots, n\},$ $\mathcal{A}_i = \{0,1\}$ is action space, $\mathcal{P}_i = \{[[p_{jk}^a]]\}_{\{a \in \mathcal{A}\}}$ is the transition probability matrix and $p_{jk}^a$ is the transition probability from state $j$ to $k$ when action $a$ is applied. The decision maker (DM) does not observe the state of systems but makes his decisions based on the information obtained via evolution of states. Based on this observed information, the decision maker selects action $a_{t,i} \in \mathcal{A}_i$ at time $t=1,2, \cdots.$ The state of system $i$ at time $t$ is denoted by $s_{t,i} \in \mathcal{S}.$ A DM receives a real valued reward $r_i(j,a)$ if $a_{t,i} =a$ and $s_{t,i} = j.$ 
The system $i$ make transition to state $s_{t+1,i},$ and  $p_{jk}^a = \prob{s_{t+1,i} =j~|~s_{t,i} = i, a_{t,i} =a }.$
A DM perceives one of finite number of messages. Assume that   $\mathcal{O} = \{1,2,3, \cdots, K \}$ represents  the set of messages\footnote{Example is a google news recommendation system, where different messages correspond to actions of a user---like, dislike, watch later etc. The user takes different actions with some probability based on user interest state. This  generates reward to RS based on user behavior}.  If the message $k \in \mathcal{O}$ is observed with known probability from state $j$ under action $a$ for systen $i$ and this is denoted by $q_{i,jk}^a = \prob{k~|~s_{t,i}=j, a_{t,i} = a}.$ Thus $\mathcal{Q}_i = [[q_{i,jk}^{a}]]_{\{a \in \mathcal{A}\}}.$ 
The discount parameter is denoted by $\beta.$
Each bandit evolves in discrete time steps. 

An  infinite horizon discounted problem with  a policy  $\phi $ is given as follows.
\begin{eqnarray}
V_{\phi}(s) = 
\mathrm{E}_{\phi}\left(\sum_{t=0}^{\infty} \sum_{i=1}^{N} \beta^{t} r_i(s_{t,i},a_{t,i})  \right).
\label{eqn:T-horizon-valf-RMAB}
\end{eqnarray}
There is an activation constrained on bandits, i.e., $\sum_{i=1}^{N} a_{t,i} = 1.$ The policy $\phi: H_t \rightarrow \{1,2,,3, \cdots,N\},$ where $H_t$ denotes the history upto time $t,$ and $H_t:=\{a_1,o_1, \cdots, a_{t-1},o_{t-1}  \}.$
The Markov stationary deterministic policy is studied. 
$V_{\phi}(s)$ is the value function for given initial state $s.$ 
DM's goal is to choose the strategy $\phi$ to optimize $V_{\phi}(s),$ subject to constraint $\sum_{i=1}^{N} a_{t,i} = 1.$ Thus, the optimal value function is denoted by $V^*.$
 The discounted relaxed constrained problem using Lagrangian method is written as follows. 
\begin{eqnarray}
V_{\phi}(s) &=& 
\mathrm{E}_{\phi}\left(\sum_{i=1}^{N} \sum_{t=0}^{\infty}  \beta^{t} \left[  r_i(s_{t,i},a_{t,i}) + W (1 - a_{t,i}) \right] \right). \nonumber  \\
V^*(s) &=& \max_{\phi \in \Phi}  V_{\phi}(s).
\label{eqn:T-horizon-valf-RMAB-relaxed}
\end{eqnarray}
Here, $\Phi$ is the space of all Markov stationary deterministic policies.

\subsection{A single-armed restless bandit and preliminaries}
\label{sec:model-POMDP}
In this section, a single armed bandit with partially observable state is discussed and we remove dependence of arm on $i$ for notation simplicity. A single armed restless bandit  is a special case of partially observable Markov decision processes (POMDPs). 
We can rewrite problem in~\eqref{eqn:T-horizon-valf-RMAB-relaxed} for partially observable with belief $\pi.$ 
The DM only observes messages (signals) but no state information. The DM maintains initial belief as prior $\pi \in \Pi(\mathcal{S}),$ where  $\Pi(\mathcal{S}) = \{ \pi =(\pi(1), \pi(2), \cdots, \pi(n))~\vert~ \sum_{j=1}^{n} \pi(j) =1,  0 \leq \pi(j) \leq 1, \text{ for all } j \in \mathcal{S} \}$ is belief space and   $\pi(j)$ is probability of state being $j,$ i.e., $s = j.$ Based on initial belief $\pi,$ the value function under policy $\phi$ is 

{\small{
\begin{eqnarray*}
V_{\phi}(\pi) = 
\mathrm{E}_{\phi}\left( \sum_{t=0}^{\infty}  \beta^{t} \left[  \sum_{j=1}^{n} r(s_{t} = j,a_{t}) \pi(j) + W (1 - a_{t,i}) \right] \right).
\end{eqnarray*}
}}

The  DM optimizes the value function   and it is given by 
\begin{eqnarray}
V^{*}(\pi) = \max_{\phi \in \Phi} V_{\phi}(\pi).
\label{eqn:T-horizon-opt-valf-belief}
\end{eqnarray} 
From \cite{BertsekasV195,BertsekasV295}, we know that the information observed in the history $H_t$ is captured in form of belief $\pi_t \in \Pi(\mathcal{S}),$    $\pi_t$ is the Bayesian posterior over states given history 
\begin{eqnarray*}
\pi_{t}(j) &=& \prob{s_t = j~|~H_t} \\
\pi_{t}(j) &=& \prob{s_t = j~|~\pi_{t-1},o_t =k,a_t = a}\\
\pi_t &=& (\pi_{t}(1),\pi_{t}(2), \cdots,\pi_{t}(n) ).
\end{eqnarray*}
This is  shown to be sufficient information which captures all history upto $t.$  
Note that there are two actions are available to a single armed bandit---play or not play. Corresponding to this, there are actions dependent transition probabilities. We study the following models for a single armed bandit based on transition probabilities  and information observed from each action. 


\subsubsection{Model $1$}
In this model, a decision maker does not observe  state from both actions.  
This is an example of two action POMDP, where action $a=1$ provides a signals and other action $a=0$ provides no information to decision maker. 
For action $a = 1,$ DM observes a signal $k$ and the posterior belief is computed and the computations are as follows. 
Let $\xi(j,k~|~\pi, a)$ be the probability that the message $k$ is received from state $j$ given prior $\pi_t$ and action $a,$ and $\xi(j,k~|~\pi, a) = \sum_{i\in S} \pi_t(i)  p_{i,j}^a q_{i,k}^a.$ Define $\sigma(k~|~\pi,a)$ is the probability of observing message $k$ given prior $\pi_t$ and action $a.$ It is given by 
\begin{eqnarray*}
\sigma(k~|~\pi_t,a) &=& \sum_{j=1}^{n} \xi(j,k~|~\pi_t, a) \\ 
&=& \sum_{j \in S} \sum_{i\in S} \pi_t(i)  p_{i,j}^a q_{i,k}^a. 
\end{eqnarray*}
The Bayesian posterior given prior $\pi_t$ and action $a$ and signal $k$ is denoted by  $\Gamma(\pi_t,a,k),$ and $\Gamma_j(\pi,a,k) = \frac{\xi(j,k~|~\pi_t, a)}{\sigma(k~|~\pi_t,a)},$ $\Gamma(\pi_t,a,k) = (\Gamma_1(\pi_t,a,k), \cdots, \Gamma_n(\pi_t,a,k)) \in \Pi(\mathcal{S}).$  Then
\begin{eqnarray*}
	\pi_{t+1}(l) = \Gamma_l(\pi_t,a_t, o_t = k) 
	&=& \frac{\sum_{i\in S} \pi_t(i)  p_{i,l}^a q_{i,k}^a}{\sum_{j \in S} \sum_{i\in S} \pi_t(i)  p_{i,j}^a q_{i,k}^a}.
\end{eqnarray*}
When action $a =0,$ no signal is observed and hence 
the posterior belief $\pi_{t+1} = \pi_t P^0.$ 

Let $B(\mathcal{S})$ be the set of bounded real valued functions on $\Pi(\mathcal{S}).$ Define function $g:\Pi(\mathcal{S}) \times \mathcal{A} \times B(\mathcal{S}) \rightarrow \mathcal{R}$ and we can write $g(\pi,a,V)$ as function of immediate reward and future value function, thus 
\begin{eqnarray*}
g(\pi,a = 1,V) &=& \sum_{j=1}^{n} \pi(j) r(j,a=1) + \beta \sum_{k \in \mathcal{O}} \sigma(k~|~\pi,a) \\ & & \times  
V(\Gamma(\pi,a,k)) \\
g(\pi,a = 0,V) &=& \sum_{j=1}^{n} \pi(j) r(j,a=0) + W + \beta  V(\pi P^0) 
\end{eqnarray*} 
for $\pi \in \Pi(\mathcal{S}),$ $a \in \mathcal{A}$ and $V \in B(\mathcal{S}).$
$P^0$ is the transition probability for not playing arm.

An optimal dynamic programming algorithm is given as follows.
\begin{eqnarray}
V^*(\pi) = \max_{a \in \mathcal{A}} g(\pi,a,V^{*}).
\label{eqn:opt-dynamic-infinite}
\end{eqnarray}
It is difficult to claim indexability for this model and apply  index policy. Hence we study MC rollout policy in next section. 

\subsubsection{Model $2$}
In this model, a decision maker takes action $a=1,$ it just provide signals but does not provide any perfect information about state. The action $a=0$  gives perfect state information. Moreover, the transition occurs to  a fixed state $m$ which is restart state. Then the dynamic program is given as follows. 
\begin{eqnarray*}
	g(\pi,a = 1,V) &=& \sum_{j=1}^{n} \pi(j) r(j,a=1) + \beta \sum_{k \in \mathcal{O}} \sigma(k~|~\pi,a) \\ & & \times  
	V(\Gamma(\pi,a,k)) \\
	g(\pi,a = 0,V) &=& \sum_{j=1}^{n} \pi(j) r(j,a=0) + W + \beta  V(e_m) 
\end{eqnarray*} 
where  $e_m = [0,0, \cdots, 1,0, \cdots, 0]^T,$ $1$ is for state $m.$ The transition probability matrix of not playing action is $P^0,$ and $m$th column of it is a unit vector, i.e., all elements are $1$ and remaining columns are zero vectors. An optimal dynamic programming algorithm is given by  
\begin{eqnarray}
V^*(\pi) = \max_{a \in \mathcal{A}} g(\pi,a,V^{*}).
\label{eqn:opt-dynamic-infinite-model2}
\end{eqnarray}
In next section, we show that a bandit is indexable and but it is difficult to obtain  closed form expression of Whittle index. 

\subsubsection{Model $3$}
In this model, we  further relax   assumptions stated in previous models. We assume that state is perfectly observable for both actions. Moreover for action $a=1,$ transition from state $i$ to fixed state $m_1 \in \mathcal{S}$ occurs with probability $1,$ $i =1,2, \cdots, n.$ Similarly, for action $a=0$ a state transition from state $i$ to a fixed state $m_2 \in \mathcal{S}$ occurs with probability $1.$ 
The dynamic program is  
\begin{eqnarray*}
	g(\pi,a = 1,V) &=& \sum_{j=1}^{n} \pi(j) r(j,a=1) + \beta   
	V(e_{m_1}) \\
	g(\pi,a = 0,V) &=& \sum_{j=1}^{n} \pi(j) r(j,a=0) + W + \beta  V(e_{m_2}) 
\end{eqnarray*} 
where  $e_{m_1} = [0,0, \cdots, 1,0, \cdots, 0]^T,$ $1$ is at position $m_1.$
$e_{m_2} = [0,0, \cdots, 1,0, \cdots, 0]^T,$ $1$ is at position $m_2.$ 
An optimal dynamic programming algorithm is 
\begin{eqnarray}
V^*(\pi) = \max_{a \in \mathcal{A}} g(\pi,a,V^{*}).
\label{eqn:opt-dynamic-infinite-model3}
\end{eqnarray}

We will show  that a bandit is indexable and even obtain the closed form expression of Whittle index. 

\section{Structural results and Indexability}
\label{sec:Structural-Prop}
In this section we provide structural results, indexability of a restless bandits and derive index formula.
We derive two key results---monotonicity of optimal value functions and threshold type policy.

\subsection{Structural Properties}

\begin{lemma}[Convexity of value function]
	 For infinite horizon problem, the optimal value function $V^*(\pi)$ is convex in $\pi$ for $\pi \in \Pi(S).$   
	\label{lemma:convexity}
\end{lemma}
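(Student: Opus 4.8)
The plan is to establish convexity by showing that the dynamic programming (Bellman) operator preserves convexity, and then to inherit the property for $V^*$ as a limit of value iterates. Define the operator $T$ on $B(\mathcal{S})$ by $(TV)(\pi) = \max_{a \in \mathcal{A}} g(\pi,a,V)$, so that $V^*$ is its fixed point (existence and uniqueness follow from $\beta<1$, which makes $T$ a sup-norm contraction). First I would argue that if $V$ is convex then $TV$ is convex. Convexity is then inherited by $V^*$ because value iteration $V_{n+1} = T V_n$ started from $V_0 \equiv 0$ produces convex iterates $V_n$ converging uniformly to $V^*$, and a pointwise limit of convex functions is convex.

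The core is to show, for each fixed action $a$, that $\pi \mapsto g(\pi,a,V)$ is convex whenever $V$ is convex. The immediate-reward term $\sum_{j} \pi(j) r(j,a)$ is linear in $\pi$, hence convex, for either action. For $a=0$ the continuation term is either the constant $\beta V(e_m)$ (Models $2$ and $3$), which is trivially convex, or $\beta V(\pi P^0)$ (Model $1$), which is the composition of the convex $V$ with the linear map $\pi \mapsto \pi P^0$ and is therefore convex. The only delicate term is the continuation value under $a=1$, namely $\sum_{k \in \mathcal{O}} \sigma(k \mid \pi,a)\, V(\Gamma(\pi,a,k))$.

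For this term I would use a perspective-function argument. Recall that $\xi(l,k \mid \pi,a) = \sum_{i} \pi(i) p_{i,l}^a q_{i,k}^a$ and $\sigma(k \mid \pi,a) = \sum_{l} \xi(l,k \mid \pi,a)$ are both linear in $\pi$, while $\Gamma_l(\pi,a,k) = \xi(l,k \mid \pi,a)/\sigma(k \mid \pi,a)$. Hence each summand can be written as
\[
\sigma(k \mid \pi,a)\, V\!\left( \frac{\xi(\cdot,k \mid \pi,a)}{\sigma(k \mid \pi,a)} \right),
\]
which is exactly the perspective of the convex function $V$ evaluated at the vector $\xi(\cdot,k \mid \pi,a)$ with scale $\sigma(k \mid \pi,a)$. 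Since the perspective $(x,t) \mapsto t\,V(x/t)$ of a convex $V$ is jointly convex on $t>0$, and since both $x = \xi(\cdot,k \mid \pi,a)$ and $t = \sigma(k \mid \pi,a)$ are affine (indeed linear) in $\pi$, composing with this affine map preserves convexity; summing the convex summands over $k$ keeps the term convex. Thus $g(\cdot,a,V)$ is convex for each $a$, and since the pointwise maximum of finitely many convex functions is convex, $TV$ is convex.

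The step I expect to require the most care is this perspective-convexity claim, specifically the degeneracy when $\sigma(k \mid \pi,a)=0$ (message $k$ has zero probability), where $\Gamma(\pi,a,k)$ is undefined. I would handle it by restricting the sum to messages with $\sigma(k \mid \pi,a)>0$ and noting that, by boundedness of $V \in B(\mathcal{S})$, the corresponding summand tends to $0$ as $t \downarrow 0$, so the perspective extends continuously and convexity is preserved up to the boundary of the simplex $\Pi(\mathcal{S})$. The remaining bookkeeping, namely uniform convergence $V_n \to V^*$ and closedness of convexity under pointwise limits, is routine given $\beta<1$.
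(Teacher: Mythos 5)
Your argument is correct and follows essentially the same route as the paper: the paper's proof is an induction on the value-iteration operator, invoking \cite[Lemma 2]{Astrom69} for the key step that $\sum_{k}\sigma(k\mid\pi,a)V(\Gamma(\pi,a,k))$ is convex whenever $V$ is, and your perspective-function computation is precisely the content of that cited lemma. You have simply written out in full the details the paper delegates to the references, including the careful handling of the $\sigma(k\mid\pi,a)=0$ degeneracy.
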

Proof of this result using induction method, and it uses \cite[Lemma $2$]{Astrom69} to prove convexity of value function. Proof is along lines of \cite[Lemma $2$]{Meshram18}.
We use maximum likelihood ratio (MLR) order for comparison of belief $\pi'$s   MLR order is denoted  as $\pi \geq_r \widetilde{\pi}.$   Totally positive order $2$ ($\mathrm{TP}_2$) for comparison of transition probability matrices. 
\begin{lemma}[Monotonicity of value function] 
	\cite{Lovejoy87}: 
	The optimal value function $V^*(\pi)$ is monotone in belief $\pi,$ that is, $V^*(\pi) \geq V^*(\widetilde{\pi})$ whenever $\pi \geq_{r} \widetilde{\pi}$ for $\pi, \widetilde{\pi} \in \Pi(S)$ under following assumptions. 
	\begin{itemize}
		\item reward $r(j,a)$ is non decreasing in  $j \in \mathcal{S}$ for fixed $a.$
		\item transition probability matrices $P^1$ and $P^0$ are $\mathrm{TP}_2$ ordered.
		\item the observation row vector for arm $q(j) \geq q(k)$ for $j \geq i$ and $i,j \in \mathcal{S}.$ 
	\end{itemize}
	\label{lemma:monotonicity} 
\end{lemma}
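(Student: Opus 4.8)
The plan is to prove monotonicity by value iteration together with an induction showing that the Bellman operator preserves the class of functions non-decreasing in the MLR order. Define $(TV)(\pi) = \max_{a \in \mathcal{A}} g(\pi,a,V)$ and the iterates $V_{n+1} = T V_n$ with $V_0 \equiv 0$. Since $T$ is a $\beta$-contraction on $B(\mathcal{S})$, we have $V_n \to V^*$ uniformly, and since a pointwise limit of MLR-monotone functions is again MLR-monotone, it suffices to show that if $V$ is non-decreasing in $\geq_r$ then so is $TV$. The base case $V_0 \equiv 0$ is immediate. For the inductive step, fix an action $a$ and two beliefs $\pi \geq_r \widetilde{\pi}$, and examine $g(\pi,a,V)$ term by term.

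The immediate reward $\sum_j \pi(j) r(j,a)$ is the easy part: MLR dominance implies first-order stochastic dominance, so for the increasing function $r(\cdot,a)$ (first assumption) we obtain $\sum_j \pi(j) r(j,a) \geq \sum_j \widetilde{\pi}(j) r(j,a)$. The continuation term $\beta \sum_k \sigma(k \given \pi,a) V(\Gamma(\pi,a,k))$ is the crux. Here I would first record three auxiliary monotonicity facts for the Bayesian update $\Gamma$, each a consequence of a stated hypothesis: (i) for fixed signal $k$, the posterior $\Gamma(\pi,a,k)$ is MLR-increasing in $\pi$, which follows from $P^a$ being $\mathrm{TP}_2$ (second assumption); (ii) for fixed $\pi$, the posterior $\Gamma(\pi,a,k)$ is MLR-increasing in the signal $k$, which follows from the observation ordering assumption (third assumption); and (iii) the signal law is stochastically increasing in the prior, i.e.\ $\pi \geq_r \widetilde{\pi}$ gives $\sigma(\cdot \given \pi,a) \geq_{st} \sigma(\cdot \given \widetilde{\pi},a)$ (first-order stochastic dominance).

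Granting these, the chain of inequalities runs as follows. By (i) and the induction hypothesis, $V(\Gamma(\pi,a,k)) \geq V(\Gamma(\widetilde{\pi},a,k))$ for every $k$, and since the weights $\sigma(k \given \pi,a)$ are non-negative, $\sum_k \sigma(k \given \pi,a) V(\Gamma(\pi,a,k)) \geq \sum_k \sigma(k \given \pi,a) V(\Gamma(\widetilde{\pi},a,k))$. By (ii) and the induction hypothesis the map $k \mapsto V(\Gamma(\widetilde{\pi},a,k))$ is non-decreasing, so by (iii) we may replace the weights $\sigma(\cdot \given \pi,a)$ by the dominated weights $\sigma(\cdot \given \widetilde{\pi},a)$ and only decrease the sum, giving $\sum_k \sigma(k \given \pi,a) V(\Gamma(\widetilde{\pi},a,k)) \geq \sum_k \sigma(k \given \widetilde{\pi},a) V(\Gamma(\widetilde{\pi},a,k))$. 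Composing the two inequalities yields monotonicity of the continuation term. For $a=0$ in Models~$2$ and $3$ this step is trivial, since the continuation value $V(e_m)$ (respectively $V(e_{m_2})$) is independent of $\pi$; in Model~$1$ one instead uses $\pi P^0 \geq_r \widetilde{\pi} P^0$, again from $\mathrm{TP}_2$ of $P^0$.

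Combining the reward and continuation bounds gives $g(\pi,a,V) \geq g(\widetilde{\pi},a,V)$ for each fixed $a$, and taking the maximum over the finite action set preserves the inequality, so $(TV)(\pi) \geq (TV)(\widetilde{\pi})$, which closes the induction and passes to the limit $V^*$. I expect the main obstacle to be the two-sided $\mathrm{TP}_2$-preservation facts (i) and (ii) --- that the update $\Gamma$ is MLR-monotone simultaneously in the prior and in the signal --- since these are precisely where the $\mathrm{TP}_2$ structure of $P^a$ and of the observation matrix must be exploited through a $2\times 2$-minor computation on the bilinear form defining $\Gamma$. These are exactly the properties established in \cite{Lovejoy87}, which I would invoke rather than re-derive.
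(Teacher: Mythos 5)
Your proposal is correct and follows essentially the same route as the paper, which only sketches the argument: induction on the dynamic programming operator, using the fact that the stated assumptions make $\Gamma$ and $\sigma$ monotone in the prior, action, and observation (the preservation properties from \cite{Lovejoy87}), so that MLR-monotonicity of the value function is preserved at each iteration and passes to the limit. Your write-up simply makes explicit the three auxiliary facts and the chain of inequalities that the paper leaves implicit.
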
 

We  sketch the proof.  The assumptions stated here preserves monotonocity in belief $\Gamma,$ and  $\sigma$ whenever there is ordering in prior $\pi,$ action $a $ and observation $k.$ This  preserves the ordering in value functions in belief $\pi.$ Using induction method on dynamic program and monotonicity of value functions in belief , we get the desired result. 


We note that the Lemma~\ref{lemma:convexity} and ~\ref{lemma:monotonicity} holds for all models under different assumptions on model. But threshold policy and indexability holds true  only for model $2$ and $3.$ 	

A threshold type policy provides partition of belief state space $\Pi(S)$ into three disjoint regions,  $\Lambda_1, \Lambda_2, \Lambda_3 \subseteq \Pi(S),$  where $\Lambda_1 = \{\pi \in \Pi(S): a^*_t(\pi) = 1  \}$ $\Lambda_2 = \{\pi \in \Pi(S): a^*_t(\pi) =  0 \}$ and  $\Lambda_3 = \{\pi \in \Pi(S): a^*_t(\pi) = 1~\text{and }~ 0 \}.$ $a^*_t(\pi) \in \{0,1\}$ is the optimal action for belief $\pi$ at time step $t.$   Illustration of this is given in Fig.~\ref{fig:threshold-type}.

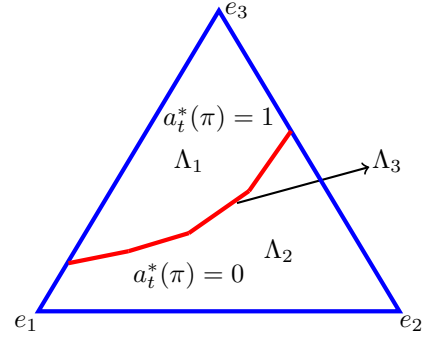
\begin{figure}
	\begin{center}  
		\begin{tikzpicture}[scale = 0.8] 
		\draw (3, 3.2) node {$a^*_t(\pi) = 1$};
		\draw (2.5, 0.6) node {$a^*_t(\pi) = 0$};
		\draw (-0.2,-0.2) node {$e_1$};
		\draw (6.2,-0.2) node {$e_2$};
		\draw (3.3,5) node {$e_3$};
		\draw (2.5,2.5) node {$\Lambda_1$};
		\draw (4,1) node {$\Lambda_2$};
		\draw (5.8,2.5) node {$\Lambda_3$};
		\draw[black, thick, ->] (3.3,1.8) -- (5.5,2.4);
		\draw[blue, ultra thick] (0,0) -- (6,0) -- (3,5) -- cycle;
		\draw[red, ultra thick] (0.5,0.8) -- (1.5,1);
		\draw[red, ultra thick] (1.5,1) -- (2.5,1.3);
		\draw[red, ultra thick] (2.5,1.3) -- (3.5,2);
		\draw[red, ultra thick] (3.5,2) -- (4.2,3);	
		\end{tikzpicture}
	\end{center}
	\caption{Threshold type policy illustration}
	\label{fig:threshold-type} 
\end{figure}

\begin{definition}[Threshold type policy]	
	The optimal policy is called a threshold type if one of the following holds true. 
	\begin{enumerate}
		\item The optimal action $a_t^*(\pi) =1$ for all $t$ and all $\pi \in \Pi(S),$ that is $\Lambda_1 = \Pi(S)$ and $\Lambda_2 = \Lambda_3 = \emptyset.$
		\item The optimal action $a_t^*(\pi) =0$ for all $t$ and all $\pi \in \Pi(S),$ that is $\Lambda_2 = \Pi(S)$ and $\Lambda_1 = \Lambda_3 = \emptyset.$
		\item The optimal action $a_t^*(\pi) =1$ for  all $\pi \in \Lambda_1,$ $a_t^*(\pi) =0$ for  all $\pi \in \Lambda_2,$ and $a_t^*(\pi) =1$ and $0$ for  all $\pi \in \Lambda_3,$ that is, $\Lambda_1, \Lambda_2 ,\Lambda_3 \neq  \emptyset.$ Also $\Lambda_1 \cap \Lambda_2 \cap \Lambda_3 = \emptyset.$ 
	\end{enumerate}
\end{definition} 
We next show a threshold policy result and indexability for model $2$ and $3.$ 
We make use of same assumption as stated in previous Lemma~\ref{lemma:monotonicity}.
\begin{lemma}[Threshold type policy]
	In Model $2$ and Model $3,$ the  optimal policy is of  threshold type.
	\label{lemma-threshold-policy} 
\end{lemma}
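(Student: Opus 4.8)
The plan is to reduce the whole statement to the behaviour of the single advantage function
\[
\Delta(\pi) \bydef g(\pi,1,V^{*}) - g(\pi,0,V^{*}),
\]
since in both~\eqref{eqn:opt-dynamic-infinite-model2} and~\eqref{eqn:opt-dynamic-infinite-model3} the optimal action is $a^{*}(\pi)=1$ exactly when $\Delta(\pi)>0$, $a^{*}(\pi)=0$ when $\Delta(\pi)<0$, and both actions are optimal when $\Delta(\pi)=0$. Thus $\Lambda_1=\{\Delta>0\}$, $\Lambda_2=\{\Delta<0\}$ and $\Lambda_3=\{\Delta=0\}$, and the three cases of the definition correspond precisely to $\Delta$ being everywhere positive, everywhere negative, or genuinely sign-changing. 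The entire proof therefore amounts to controlling the shape of $\Delta$ on $\Pi(\mathcal{S})$, and this is exactly where the two structural lemmas enter.

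For Model~$3$ I would first note that both continuation terms collapse to constants: the active action sends the belief to the fixed point mass $e_{m_1}$ and the passive action to $e_{m_2}$, so $V^{*}(e_{m_1})$ and $V^{*}(e_{m_2})$ do not depend on $\pi$. Since the immediate rewards $\sum_j \pi(j) r(j,a)$ are linear in $\pi$, the advantage
\[
\Delta(\pi) = \sum_{j=1}^{n}\pi(j)\big(r(j,1)-r(j,0)\big) - W + \beta\big(V^{*}(e_{m_1})-V^{*}(e_{m_2})\big)
\]
is an affine functional of $\pi$. An affine function on the simplex is either one-signed (the "always play'' / "never play'' cases) or changes sign across a single hyperplane whose intersection with $\Pi(\mathcal{S})$ is the indifference set $\Lambda_3$ and which splits the simplex into the two convex regions $\Lambda_1,\Lambda_2$. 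Monotonicity of $V^{*}$ from Lemma~\ref{lemma:monotonicity}, together with the reward ordering, fixes the orientation of this hyperplane relative to the MLR order, so that $\Lambda_1$ is an upper set and $\Lambda_2$ a lower set for $\geq_r$; this gives the threshold structure for Model~$3$.

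For Model~$2$ the passive action still resets to the fixed belief $e_m$, so $g(\pi,0,V^{*})$ remains affine in $\pi$, but the active action now carries the genuine Bayesian update and the continuation term $\sum_{k}\sigma(k\mid\pi,1)\,V^{*}(\Gamma(\pi,1,k))$. The key step is to show this term is convex in $\pi$: writing $\sigma(k\mid\pi,1)\,\Gamma_l(\pi,1,k)=\xi(l,k\mid\pi,1)$, both $\xi(\cdot,k\mid\pi,1)$ and $\sigma(k\mid\pi,1)$ are linear in $\pi$, so each summand is the perspective of the convex function $V^{*}$ (convex by Lemma~\ref{lemma:convexity}) evaluated at a linear image of $\pi$, hence convex; this is the same Astrom-type argument that underlies Lemma~\ref{lemma:convexity}. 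Consequently $g(\pi,1,V^{*})$ is convex, $g(\pi,0,V^{*})$ is affine, and $\Delta$ is convex, so the passive region $\Lambda_2=\{\Delta\le 0\}$ is convex.

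The main obstacle is Model~$2$: convexity alone makes $\Lambda_2$ convex but leaves open that the active region $\{\Delta\ge 0\}$ might be disconnected, since a convex function can dip below zero on a middle segment and be positive at both ends. I would close this gap with Lemma~\ref{lemma:monotonicity}. Restricting $\Delta$ to an MLR-ordered line, the $\mathrm{TP}_2$ and observation-ordering assumptions make the active continuation value $\sum_{k}\sigma(k\mid\pi,1)V^{*}(\Gamma(\pi,1,k))$ monotone in $\geq_r$, while the reward ordering controls the incremental reward $r(\cdot,1)-r(\cdot,0)$; together these should force $\Delta$ to be monotone along each MLR line. A convex function that is also monotone along every such line can cross zero at most once, which is exactly the single-crossing property that upgrades the convex partition into the threshold partition $(\Lambda_1,\Lambda_2,\Lambda_3)$. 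I therefore expect most of the effort to go into (i) the convexity-preservation of the Bayesian continuation operator and (ii) verifying that the stated assumptions genuinely render $\Delta$ MLR-monotone, as these two facts in combination are what deliver the single threshold.
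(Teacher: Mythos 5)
Your proposal is correct and ultimately rests on the same mechanism as the paper's proof: because the passive action restarts the belief at a fixed point mass, the passive continuation value is a constant in $\pi$, and the remaining terms of the advantage function are MLR-monotone under the assumptions of Lemma~\ref{lemma:monotonicity}, which yields the single-crossing (threshold) structure. The convexity detour for Model~$2$ is a valid extra observation but, as you yourself note, is not what closes the argument --- the monotonicity step you fall back on is exactly the paper's proof.
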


We provide a sketch of the proof. Define $f(\pi, V^*) = g(\pi,a=1, V^*) -
g(\pi,a=0, V^*).$ We show that $f(\pi, V^*)$ is non decreasing in $\pi.$ 
In these model, not playing action, i.e., $a=0$ implies restart state where transition occurs to  a fixed state. Thus the future value function for action $a-0$  is constant.  From definition of $f(\pi, V^*),$ that term gets canceled, hence using  Lemma~\ref{lemma:monotonicity}, we show that $f(\pi, V^*)$ is non decreasing in $\pi.$ This is sufficient for threshold type policy. Detailed proof is given in Appendix.

\subsection{Indexability and Whittle index}
From threshold policy result in Lemma~\ref{lemma-threshold-policy}, we  define 
\begin{eqnarray*}
	U_{1}(W) := \left\{ \pi \in \Pi(S): V(\pi, a =1, W)  > V(\pi, a = 0,W ) \right\} 
	\nonumber \\
	U_{0}(W) := \left\{ \pi \in \Pi(S): V(\pi, a =1, W)  \leq  V(\pi, a = 0,W ) \right\} 
\end{eqnarray*}
Hence  $U_{0}(W) = \Lambda_2 \cup \Lambda_3.$ 
\begin{definition}[Indexability \cite{Whittle88}]
	As subsidy $W$ increases from $-\infty$ to $+\infty,$  $U_{0}(W)$ increases from $\emptyset$ to full set $\Pi(S).$ 
	\label{def:indexability-single-dim}	
\end{definition} 
To show the indexability we require that whenever $W_2 > W_1$ implies $U_0(W_1) \subseteq U_0(W_2).$ We use the following result for indexability. 
\begin{lemma}
	For $\pi \in \Pi(S)$ if 
	\begin{equation}
	\frac{\partial V(\pi, 1, W)}{\partial W} \bigg 
	\rvert_{\pi = \pi_T(W)} \ < \
	\frac{\partial V(\pi, 0, W)}{\partial W} \bigg 
	\rvert_{\pi=\pi_T(W)},
	\label{eq:grad-VS-less-grad-VNS1}
	\end{equation}
	and  $\pi_T(W) \in \Lambda_3, $
	then $U_0(W)$ is a monotonically increasing function of
	$W.$
	\label{lemma:indexability}
\end{lemma}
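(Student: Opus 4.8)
The plan is to work with the gap function $f(\pi,W) := V(\pi,1,W) - V(\pi,0,W)$, so that by definition $U_0(W) = \{\pi \in \Pi(S): f(\pi,W) \le 0\}$. From the threshold result (Lemma~\ref{lemma-threshold-policy}) the quantity $f(\pi,W)$ is non-decreasing in $\pi$ with respect to the MLR order $\geq_r$; consequently $U_0(W)$ is an MLR lower-contour set whose boundary is exactly the indifference locus, i.e.\ the thresholds $\pi_T(W) \in \Lambda_3$ solving $f(\pi_T(W),W)=0$. Establishing indexability (Definition~\ref{def:indexability-single-dim}) then amounts to proving that $W_1 < W_2$ forces $U_0(W_1) \subseteq U_0(W_2)$, equivalently that the threshold locus migrates so as to enlarge the passive region as $W$ grows.

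First I would reduce the multidimensional statement to a scalar one. Fix any MLR-monotone path $\lambda \mapsto \pi(\lambda)$ through the simplex; along it $f(\pi(\lambda),W)$ is non-decreasing in $\lambda$ by Lemma~\ref{lemma-threshold-policy}, and the threshold is the scalar $\lambda_T(W)$ with $f(\pi(\lambda_T(W)),W)=0$ and $\pi(\lambda_T(W)) \in \Lambda_3$. Because $U_0(W)$ is a lower-contour set, showing that $\lambda_T(W)$ is non-decreasing in $W$ along every such path yields $U_0(W_1)\subseteq U_0(W_2)$ on the whole simplex. Next I would differentiate the defining identity $f(\pi(\lambda_T(W)),W)=0$ implicitly in $W$. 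The implicit function theorem gives
\[
\frac{d\lambda_T}{dW} \;=\; -\,\frac{\partial f/\partial W}{\partial f/\partial \lambda}\bigg|_{\pi=\pi_T(W)} .
\]
The denominator is non-negative because $f$ is non-decreasing in $\lambda$, while the numerator is
\[
\frac{\partial f}{\partial W}\bigg|_{\pi_T(W)} = \frac{\partial V(\pi,1,W)}{\partial W}\bigg|_{\pi_T(W)} - \frac{\partial V(\pi,0,W)}{\partial W}\bigg|_{\pi_T(W)} < 0,
\]
which is precisely the hypothesis~\eqref{eq:grad-VS-less-grad-VNS1} evaluated at $\pi_T(W)\in\Lambda_3$. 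Hence $d\lambda_T/dW \ge 0$ (and strictly positive wherever the crossing is transversal, $\partial f/\partial \lambda>0$), so the passive sub-level set $\{\lambda \le \lambda_T(W)\}$ can only grow with $W$. Integrating this infinitesimal statement over $[W_1,W_2]$ and over all MLR paths gives the inclusion $U_0(W_1)\subseteq U_0(W_2)$, so $U_0(W)$ increases monotonically from $\emptyset$ to $\Pi(S)$, i.e.\ the bandit is indexable.

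The main obstacle is regularity: $V(\cdot,a,\cdot)$ is built from the optimal value $V^*$, which is convex in $\pi$ (Lemma~\ref{lemma:convexity}) and convex, non-decreasing and only piecewise-smooth in $W$, so the partials used in the implicit differentiation exist only off a measure-zero set of kinks. I would handle this by working with one-sided derivatives (convexity guarantees these exist everywhere) and invoking continuity of $V^*$ in $W$ to pass from the a.e.\ infinitesimal inequality to the global inclusion; the envelope structure of $V^*$ also keeps the sign of $\partial f/\partial W$ stable across the kinks. A secondary point is the transversality $\partial f/\partial \lambda > 0$ at the crossing: where it degenerates the threshold may jump, but the lower-contour characterisation of $U_0(W)$ together with $\partial f/\partial W<0$ still forces the inclusion through the direct comparison $f(\pi,W_2) \le f(\pi,W_1)$ at boundary points, so no generality is lost.
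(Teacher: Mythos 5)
Your proposal is correct and follows essentially the same route as the paper, which itself only sketches the argument by citing the analogous scalar-threshold lemma of Meshram et al.: implicitly differentiate the indifference condition $V(\pi_T,1,W)=V(\pi_T,0,W)$, use hypothesis~\eqref{eq:grad-VS-less-grad-VNS1} for the sign of $\partial f/\partial W$ and the threshold structure for the sign of $\partial f/\partial \pi$, and conclude that the passive region can only expand. Your extra care about MLR paths (since the belief simplex is multi-dimensional and MLR is only a partial order) and about one-sided derivatives at kinks of $V^*$ addresses details the paper glosses over, but does not change the substance of the argument.
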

Proof of this lemma is analogous to \cite[Lemma $4$]{Meshram18}. 
We now present main result.
\begin{theorem}[Indexable]
	The single-armed restless hidden Markov bandit is indexable for $0 < \beta< 1$ and
	$W_{a} \leq W \leq W_b.$
	\label{thm:indexability}
\end{theorem}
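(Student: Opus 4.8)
The plan is to establish indexability by verifying the sufficient condition of Lemma~\ref{lemma:indexability}, namely that at the threshold belief $\pi_T(W)$ the gradient of the active-action value $V(\pi,1,W)$ with respect to the subsidy $W$ is strictly smaller than the gradient of the passive-action value $V(\pi,0,W)$. By Lemma~\ref{lemma-threshold-policy} the optimal policy is of threshold type, so for each $W$ the belief space partitions into a ``play'' region and a ``not-play'' region, and on the boundary set $\Lambda_3$ the two action-values coincide. The core of the argument is therefore to compute, or at least bound, $\partial V/\partial W$ for each action and compare them at the boundary.

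First I would differentiate the two action-value functions in the dynamic program with respect to $W$. For the passive action, since $a=0$ awards the subsidy $W$ at the current step and then restarts to a fixed belief ($e_m$ in Model~2, $e_{m_2}$ in Model~3), the dependence on $W$ is transparent: $g(\pi,0,V)$ contains the explicit term $W$ plus $\beta V(\text{restart state})$, so $\partial V(\pi,0,W)/\partial W = 1 + \beta\,\partial V(\text{restart})/\partial W$. For the active action $a=1$ there is no immediate $W$ reward, so $\partial V(\pi,1,W)/\partial W = \beta \sum_k \sigma(k\mid\pi,1)\,\partial V(\Gamma(\pi,1,k))/\partial W$ (Model~2) or $\beta\,\partial V(e_{m_1})/\partial W$ (Model~3). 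The key structural observation is that the subsidy $W$ accrues only in time-steps where the passive action is taken, so $\partial V/\partial W$ equals $\beta$ times the expected discounted number of future passive periods; this quantity lies in $[0,1/(1-\beta)]$ and is provably larger under the passive action than under the active action because starting passive guarantees at least one extra subsidy-collecting step. I would make this precise by writing $\partial V^*(\pi)/\partial W$ as the expected discounted count of passive actions under the optimal (threshold) policy and showing the immediate ``$+1$'' from the passive branch forces the strict inequality in~\eqref{eq:grad-VS-less-grad-VNS1} at $\pi_T(W)$.

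With the gradient inequality in hand, Lemma~\ref{lemma:indexability} immediately gives that $U_0(W)$ is monotonically increasing in $W$, which is exactly the indexability condition of Definition~\ref{def:indexability-single-dim}. To complete the theorem I would verify the two endpoint (limiting) claims that pin down the range $W_a \le W \le W_b$: for $W$ sufficiently small (below $W_a$) the active action is optimal everywhere so $U_0(W)=\emptyset$, while for $W$ large enough (above $W_b$) the passive action dominates everywhere so $U_0(W)=\Pi(S)$. These follow by comparing the one-step reward gap $\sum_j \pi(j)[r(j,1)-r(j,0)]$ against $W$ and using boundedness of the value function together with $0<\beta<1$ to control the discounted future terms uniformly over $\Pi(S)$.

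The main obstacle I anticipate is rigorously justifying the differentiability of $V^*(\pi,a,W)$ in $W$ and the interchange of differentiation with the $\max$ and the infinite-horizon expectation. The value function is convex (Lemma~\ref{lemma:convexity}) and piecewise from a finite action set, so it is differentiable in $W$ except on a measure-zero set of switching points; I would handle this by working with one-sided derivatives (or subgradients) at $\pi_T(W)$ and invoking an envelope-type argument so that at the threshold only the explicit $W$-dependence contributes, the implicit dependence through the optimal policy dropping out by optimality. The probabilistic representation of $\partial V/\partial W$ as an expected discounted count of passive steps is the cleanest route around the differentiability technicalities, and making that representation airtight—together with establishing the strict ``extra passive step'' gap precisely at the boundary belief—is where the real work lies.
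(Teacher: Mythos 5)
Your proposal is correct and follows the same skeleton as the paper's proof---both reduce indexability to the gradient condition \eqref{eq:grad-VS-less-grad-VNS1} and then invoke Lemma~\ref{lemma:indexability}---but you justify the key inequality by a genuinely different, and arguably more solid, argument. The paper asserts in a footnote that an induction shows $V(\pi,1,W)$ is non-decreasing and $V(\pi,0,W)$ is strictly increasing in $W$, and concludes that the difference is decreasing; taken literally this is a non sequitur, since a non-decreasing function minus a strictly increasing one need not be decreasing---one must compare the \emph{rates} of increase, which is exactly what \eqref{eq:grad-VS-less-grad-VNS1} demands and exactly what your occupation-measure representation supplies. Writing $\partial V/\partial W$ as the expected discounted number of passive steps and exploiting the restart structure is the right way to make that comparison precise, and it sidesteps the differentiability issues via the envelope argument you describe. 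One caveat: the ``one extra subsidy-collecting step'' heuristic needs slightly more than the crude bound $\partial V/\partial W \le 1/(1-\beta)$, which only yields $\beta/(1-\beta) < 1$ when $\beta < 1/2$. The clean fix uses the restart structure as a fixed point: letting $c(\pi)$ be the expected discounted passive count from $\pi$ under the threshold policy and $M = \sup_{\pi} c(\pi)$, every belief satisfies $c(\pi) \le \max\left( 1+\beta c(e_m), \beta M\right)$, which forces $M \le 1 + \beta c(e_m)$; hence $c(\pi,1) \le \beta M \le \beta\left(1+\beta c(e_m)\right) < 1 + \beta c(e_m) = c(\pi,0)$ for all $0 < \beta < 1$, giving the strict gradient gap at $\pi_T(W)$. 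Your endpoint analysis pinning down $W_a$ and $W_b$ is an additional piece the paper states in the theorem but never verifies.
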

%
\begin{proof}
	From Definition \ref{def:indexability-single-dim}, we need to show that 
	$U_0(W_1) \subseteq U_0(W_2)$ whenever $W_2 > W_1.$
	Note that  $V(\pi, a=1,W) -V(\pi,a =0,W)$ is decreasing
	in $W$\footnote{By induction method, one can show that $V(\pi, a=1,W)$ is non decreasing $W$ and $V(\pi, a=0,W)$  is strictly increasing in $W$ for fixed $\beta$ and $\pi$} for fixed $\pi, \beta.$ Therefore, equation
	\eqref{eq:grad-VS-less-grad-VNS1} holds true. Using Lemma
	\ref{lemma:indexability},   $U_0(W_1) \subseteq U_0(W_2).$  
	whenever $W_2 > W_1$ and $W_1, W_2 \in [W_a, W_b].$
	This completes the proof.
\end{proof}
%
We next define the Whittle index.
\begin{definition}[Whittle index \cite{Whittle88}]
	If an arm is indexable and is in state $\pi \in \Pi(S),$ then its Whittle
	index, $W(\pi),$ is
	$		W(\pi) := \inf_{W}\{W: V(\pi,1, W) =
	V(\pi,0,W) \}.$
	\label{def:whittleind} 
\end{definition}

$W(\pi)$ is a minimum subsidy $W$   such that the optimal action
is not to play the arm at given $\pi.$  The Whittle index formula requires explicit expression of $V(\pi,1,W)$ and
$V(\pi,0,W).$ Then we have  to equate  and solve this for $W.$  For Model $2,$ the index formula is not feasible but we will provide approximate index computation algorithm.  
For Model $3,$ we obtain closed form expression of index and this is given in next lemma.

\begin{lemma}[Whittle index formula for model $3$]
	Whittle index  for given belief $\pi$ is computed based on region of $e_{m_1}$ and $e_{m_2}.$   We assume that $m_1 > m_2.$
	\begin{itemize}
		\item if $e_{m_1} \in U_1(W)$  and $e_{m_2} \in U_0(W),$ then 
		\begin{eqnarray*}
		W(\pi) = (1-\beta) \left(\sum_{j=1}^{n} \left[r(j,1) - r(j,0)\right] \pi(j) \right) + \\
		\beta \left[ r(m_1,1) - r(m_2,0) \right] 
		\end{eqnarray*}
	   \item if $e_{m_1},e_{m_2} \in U_1(W)$ then 
	   \begin{eqnarray*}
	   	W(\pi) =  \sum_{j=1}^{n} \left[r(j,1) - r(j,0) \right] \pi(j) + \\
	   	\beta [r(m_1,1) - r(m_2,1)]
	   \end{eqnarray*}
	   \item if $e_{m_1},e_{m_2} \in U_0(W)$ then
	   \begin{eqnarray*}
	   	W(\pi) = \sum_{j=1}^{n} \left[r(j,1) - r(j,0)\right] \pi(j) + \\
	   	\beta \left[ r(m_1,0) -r(m_2,0)\right].
	   \end{eqnarray*}
	\end{itemize} 
\label{lemma:index-comp-model3}
\end{lemma}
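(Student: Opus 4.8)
The plan is to work directly from the Whittle index definition (Definition~\ref{def:whittleind}), which says $W(\pi)$ is the subsidy at which the two action-values coincide, $V(\pi,1,W) = V(\pi,0,W)$. Writing these out with the Model~$3$ dynamic program, $V(\pi,1,W) = \sum_{j} \pi(j) r(j,1) + \beta V^*(e_{m_1})$ and $V(\pi,0,W) = \sum_{j} \pi(j) r(j,0) + W + \beta V^*(e_{m_2})$, the indifference equation immediately rearranges to the master identity
\begin{eqnarray*}
W = \sum_{j=1}^{n} \pi(j)\left[r(j,1) - r(j,0)\right] + \beta\left[V^*(e_{m_1}) - V^*(e_{m_2})\right].
\end{eqnarray*}
So the entire problem reduces to evaluating the two corner values $V^*(e_{m_1})$ and $V^*(e_{m_2})$. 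The crucial structural fact I would exploit is that in Model~$3$ every action sends the state deterministically to a fixed point ($m_1$ under $a=1$, $m_2$ under $a=0$), so at the beliefs $e_{m_1},e_{m_2}$ the optimal policy settles into a simple deterministic loop, and the value at each corner satisfies a scalar fixed-point equation that can be solved in closed form.

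Next I would split into the three regimes of the lemma according to which of $U_0(W),U_1(W)$ each corner belief lies in; by Lemma~\ref{lemma-threshold-policy} the optimal action at each corner is then pinned down. In the case $e_{m_1},e_{m_2} \in U_1(W)$ I use that playing at $e_{m_1}$ self-loops, giving $V^*(e_{m_1}) = r(m_1,1)/(1-\beta)$, while playing at $e_{m_2}$ jumps to $e_{m_1}$, giving $V^*(e_{m_2}) = r(m_2,1) + \beta V^*(e_{m_1})$; substituting into the master identity collapses the $(1-\beta)$ denominators and yields $\beta[r(m_1,1)-r(m_2,1)]$. The case $e_{m_1},e_{m_2} \in U_0(W)$ is the mirror image: not playing at $e_{m_2}$ self-loops while accruing subsidy, and not playing at $e_{m_1}$ jumps to $e_{m_2}$; the $W$-terms entering $V^*(e_{m_1})$ and $V^*(e_{m_2})$ cancel in the difference, leaving $\beta[r(m_1,0)-r(m_2,0)]$.

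The mixed case $e_{m_1}\in U_1(W)$, $e_{m_2}\in U_0(W)$ is where the main obstacle lies, and I would treat it last. Here the two corners sit in different regions: $e_{m_1}$ self-loops under $a=1$ with no subsidy, so $V^*(e_{m_1}) = r(m_1,1)/(1-\beta)$, but $e_{m_2}$ self-loops under $a=0$ while collecting subsidy, so $V^*(e_{m_2}) = (r(m_2,0)+W)/(1-\beta)$. The difficulty is that $W$ now appears asymmetrically on the right-hand side of the master identity through $V^*(e_{m_2})$, so the indifference equation is genuinely an equation \emph{in} $W$ rather than an explicit evaluation. Collecting the $W$-terms gives $W(1 + \beta/(1-\beta)) = W/(1-\beta)$ on the left, and solving produces exactly the factor $(1-\beta)$ in front of the reward-difference sum together with $\beta[r(m_1,1)-r(m_2,0)]$, matching the stated formula. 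I would verify throughout that the regime assumptions (which corner is in which region) are consistent with the resulting $W(\pi)$, and invoke Theorem~\ref{thm:indexability} so the infimum in Definition~\ref{def:whittleind} is attained at this single crossing point, making each closed-form expression the genuine Whittle index.
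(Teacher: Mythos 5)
Your proposal is correct and follows essentially the same route as the paper's proof: write the indifference equation $V_1(\pi)=V_0(\pi)$, evaluate the corner values $V^*(e_{m_1})$, $V^*(e_{m_2})$ via their scalar fixed-point equations under the regime assumption for each case, and solve for $W$ (including the cancellation of subsidy terms in the $U_0$--$U_0$ case and the $(1-\beta)$ factor arising in the mixed case). Your ``master identity'' reorganization and the explicit consistency check at the end are minor presentational refinements, not a different argument.
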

Proof of this is given in Appendix. When there is no reward from not playing except subsidy $W,$ we can have $r(j,0) = 0$ for all $j \in \mathcal{S}.$

\section{Monte Carlo rollout policy}
\label{sec:Monte-Carlo-Rollout}
We now discuss Monte Carlo rollout policy algorithm for a single-armed bandit in case of Model $2.$  Algorithm is based on simulations, where initial belief state $\pi$ and a subsidy $W$ is given as input. We run multiple-trajectories, and each trajectory consists of (belief state $\pi$, action $a$, and observed reward $r$) 
Thus the information obtained from a single trajectory upto horizon length $H$ is  $\{ \pi_{1,l},a_{1,l}, r_{1,l}, \pi_{2,l}, a_{2,l},r_{2,l}, \cdots, \pi_{H,l}, a_{H,l}, r_{H,l} \}$ under policy $\phi.$ Here, $l$ denotes a trajectory. The value estimate of $k$th trajectory starting from belief state $\pi,$ action $a= 1$ and action $a= 0$ is 
\begin{eqnarray*}
	Q_{H,l}^{\phi}(\pi, a, W) &=& \sum_{h=1}^{H} \beta^{h-1} r_{h,l}^{\phi} \\
	&=& \sum_{h=1}^{H} \beta^{h-1} r(\pi_{h,l},a_{h,l}).
\end{eqnarray*}  
Then value estimate for state $\pi$ and action $a$ over $L$ trajectories under policy $\phi$ is 
\begin{eqnarray*}
	\widetilde{Q}_{H,L}^{\phi}(\pi, a, W) = \frac{1}{L}\sum_{l=1}^{L}  Q_{H,l}^{\phi}(\pi, a, W).
\end{eqnarray*} 
The output of Monte Carlo algorithm is $\widetilde{V}_{\phi,H,L}(\pi,a= 1,W)$ and $\widetilde{V}_{\phi,H,L}(\pi,a= 0,W).$ 
\begin{eqnarray*}
\widetilde{V}_{\phi,H,L}(\pi,a= 1,W) = r(\pi,a=1) + \widetilde{Q}_{H,L}^{\phi}(\pi,a =1,W)  
\end{eqnarray*}
\begin{eqnarray*}
\widetilde{V}_{\phi,H,L}(\pi,a= 0,W) =   W + r(\pi,a=0)  + \\  \widetilde{Q}_{H,L}^{\phi}(\pi, a=1, W) 
\end{eqnarray*}

%

\subsection{Index computation for Model $2$ }
 \label{sec:index-compute-algo}
We present algorithm for Whittle index computation using Monte Carlo rollout policy. It is described in Algorithm~\ref{algo:Whittle-index-compute}. Input is state $\pi,$ and initialize value $W.$ We run Monte Carlo rollout policy under threshold policy $\phi$ for $W$ and state $\pi.$ We obtain approximate value functions  $\widetilde{V}_{\phi,H,L}(\pi ,a= 1, W)$ and $\widetilde{V}_{\phi,H,L}(\pi,a= 0,W).$ If the difference between these approximate value functions is higher than $\epsilon > 9,$ then we change $W$ to new value of $W;$ otherwise exit an algorithm with output index $=W.$   The convergence of this algorithm follows from two-timescales stochastic approximation algorithms, \cite[Chapter $6$]{Borkar08}. In our setting, Monte Carlo rollout policy algorithm runs on faster timescale and the subsidy $W$ is updated on slower timescale. We use $\gamma$ as learning rate for $W.$  

\begin{algorithm}
	\caption{Whittle index computation algorithm for an arm}
	\begin{algorithmic}
		\STATE \textbf{Input: State of arm}  $\pi$ \\
		\STATE \textbf{Initialize} $W_{old} = W,$ $\epsilon =0.05$   $\Delta=1,$ and Stepsize $\gamma$
		\STATE \textbf{Define: } $W_{new} = W_{old}$
		\STATE \textbf{While} ($\Delta > \epsilon$) 
		
		\STATE \hspace{0.5cm} \textbf{1. Use Monte Carlo rollout policy} 
		\STATE \hspace{1cm} \textbf{Compute:} $\widetilde{V}_{\phi,H,L}(\pi,a= 1,W_{new})$ and \\ 
		\hspace{2cm} $\widetilde{V}_{\phi,H,L}(\pi,a = 0,W_{new})$  
		\STATE \hspace{0.5cm} \textbf{2. Define} 
		\begin{eqnarray*}
			\Delta(\pi,W_{new}) = \widetilde{V}_{\phi,H,L} (\pi,a= 1,W_{new}) - \\ \widetilde{V}_{\phi,H,L}(\pi,a = 0,W_{new}) 
		\end{eqnarray*}
		\STATE \hspace{1cm} $W_{old} = W_{new}$
		\STATE \hspace{1cm} $W_{new} = W_{old} + \gamma \Delta(\pi,W_{new})$
		\STATE \textbf{End}
		\STATE \textbf{3. Output:} $W(\pi)$ 
	\end{algorithmic}
	\label{algo:Whittle-index-compute}
\end{algorithm}

We derive following result with Monte Carlo rollout policy  assuming there optimal policy exists and it of  threshold type, say, $\phi.$ 
\begin{theorem} 
We assume that $r(s,a) \in [0,1],$ $0 \leq W \leq 1.$	For sufficiently large horizon length $H,$ there exist number $\widetilde{L}$  such that for all $L > \widetilde{L}$ we have 
	with probability $1-\frac{2}{H^2}$  
	\begin{eqnarray*}
		\bigg \vert V_{\phi}(\pi, a, W) - \widetilde{V}_{\phi,H,L}(\pi,a,W) \bigg \vert \leq 
		\sqrt{\frac{ z^2 \log(H) }{ L}}
	\end{eqnarray*}
	for $a \in \{0,1\}.$
	Here, $z = \frac{(1-\beta^{H})}{1-\beta}.$
	\label{thm:concentration-MC}
\end{theorem}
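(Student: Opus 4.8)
The plan is to bound the deviation between the true value function $V_\phi(\pi,a,W)$ and its Monte Carlo estimate $\widetilde{V}_{\phi,H,L}(\pi,a,W)$ by splitting the error into two pieces: a \emph{truncation error} from using a finite horizon $H$ instead of the infinite horizon, and a \emph{sampling error} from averaging only $L$ trajectories instead of taking the true expectation. First I would write $V_\phi(\pi,a,W) - \widetilde V_{\phi,H,L}(\pi,a,W) = \bigl(V_\phi - V_{\phi}^{H}\bigr) + \bigl(V_\phi^{H} - \widetilde V_{\phi,H,L}\bigr)$, where $V_\phi^{H}$ denotes the true $H$-horizon discounted value (the expectation of $Q_{H,l}^\phi$). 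For the truncation term, using $r(s,a)\in[0,1]$ the discarded tail is $\sum_{h=H+1}^{\infty}\beta^{h-1} r \le \sum_{h=H+1}^{\infty}\beta^{h-1} = \frac{\beta^{H}}{1-\beta}$, which vanishes as $H\to\infty$ and can be absorbed for $H$ sufficiently large.

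The main work is the sampling term. Each trajectory estimate satisfies $Q_{H,l}^\phi(\pi,a,W) = \sum_{h=1}^{H}\beta^{h-1} r_{h,l} \in \bigl[0,\, \sum_{h=1}^{H}\beta^{h-1}\bigr] = [0, z]$ with $z = \frac{1-\beta^{H}}{1-\beta}$, since each reward lies in $[0,1]$. The trajectories $l=1,\dots,L$ are i.i.d.\ draws under the fixed threshold policy $\phi$, and $\widetilde Q_{H,L}^\phi$ is their empirical mean, whose expectation is exactly $V_\phi^{H}$ minus the deterministic immediate-reward/subsidy offset (which is identical on both sides and cancels). Hence I would invoke Hoeffding's inequality for the average of $L$ independent random variables each bounded in an interval of length $z$: for any $\tau>0$,
\begin{eqnarray*}
\prob{\bigl\vert \widetilde Q_{H,L}^\phi(\pi,a,W) - V_\phi^{H}(\pi,a,W)\bigr\vert > \tau} \le 2\exp\!\left(-\frac{2L\tau^2}{z^2}\right).
\end{eqnarray*}

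To match the stated bound I would set the right-hand probability equal to $\frac{2}{H^2}$, i.e.\ solve $2\exp(-2L\tau^2/z^2) = 2 H^{-2}$, giving $\tau = \sqrt{\frac{z^2 \log(H)}{2L}}$, so with probability at least $1-\frac{2}{H^2}$ the sampling error is at most $\sqrt{\frac{z^2\log(H)}{2L}} \le \sqrt{\frac{z^2\log(H)}{L}}$. Combining this with the truncation bound, I would then choose $\widetilde L$ large enough that for all $L>\widetilde L$ the sampling term dominates and the truncation term is subsumed, yielding the claimed inequality $\bigl\vert V_\phi(\pi,a,W) - \widetilde V_{\phi,H,L}(\pi,a,W)\bigr\vert \le \sqrt{\frac{z^2\log(H)}{L}}$ for $a\in\{0,1\}$. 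The step I expect to be the main obstacle is making the combination of the two error sources fully rigorous: the constant $\frac12$ from Hoeffding leaves slack against the looser stated bound, but one must verify carefully that the deterministic truncation tail $\frac{\beta^H}{1-\beta}$ can indeed be absorbed into that slack for the specified range of $H$ and $L$, and that the boundedness interval $[0,z]$ used in Hoeffding is the tight one after the immediate-reward terms cancel on both sides of the difference.
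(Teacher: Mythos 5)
Your approach is essentially the paper's: treat the $L$ trajectory returns $Q^{\phi}_{H,l}(\pi,a,W)$ as i.i.d.\ random variables bounded in $[0,z]$ with $z=\frac{1-\beta^{H}}{1-\beta}$, apply Hoeffding's inequality to their empirical mean, set the failure probability to $\delta = 2/H^2$, and invert to obtain the deviation $\sqrt{z^2\log H/L}$. The one place you go beyond the paper is the explicit truncation/sampling decomposition $V_\phi - \widetilde V_{\phi,H,L} = (V_\phi - V_\phi^{H}) + (V_\phi^{H} - \widetilde V_{\phi,H,L})$: the paper silently centers Hoeffding at the \emph{infinite}-horizon value $Q^{\phi}(\pi,a,W)$ even though the mean of the truncated trajectory returns is the $H$-horizon value, so your decomposition is the more honest treatment of the bias that the paper sweeps under ``for sufficiently large $H$.''

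However, your plan for absorbing the truncation term does not close. First, the arithmetic: solving $2\exp(-2L\tau^2/z^2) = 2H^{-2}$ gives $\tau^2 = z^2\log H/L$ exactly (the $2$ from $\log(H^2)=2\log H$ cancels the $2$ in the exponent), not $z^2\log H/(2L)$; so the sampling term alone meets the stated bound with equality and there is no factor-of-$\sqrt{2}$ slack left over for the tail $\beta^{H}/(1-\beta)$. Second, and more structurally, even if constant-factor slack existed, requiring $\beta^{H}/(1-\beta) \le c\, z\sqrt{\log H/L}$ \emph{for all} $L > \widetilde L$ is impossible for fixed $H$: the right-hand side tends to $0$ as $L \to \infty$ while the left-hand side is a fixed positive constant, so taking $\widetilde L$ larger makes the absorption harder, not easier. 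As written, the two error sources cannot both be fit under $\sqrt{z^2\log H/L}$; the honest conclusion of your argument is the bound $\beta^{H}/(1-\beta) + \sqrt{z^2\log H/L}$. (To be fair, the paper's own proof carries the same unpatched bias and simply asserts the theorem's bound.)
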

We discuss the proof idea. We simulate $L$ number of trajectories which are  are independent and cumulative reward collected along each trajectory is random.  Trajectories are generated using a fixed policy $\phi.$ We use Hoeffding inequality \cite{Hoeffding63}. The probability of deviation between the infinite horizon discounted value function under policy $\phi$ and estimated value function obtained using over $L$ number of simulated trajectories greater than confidence bound decays exponentially fast. After  simplifications we obtain desired result. 
Detail steps are given in Appendix. 

\subsection{Monte Carlo rollout policy for Model $1$} 
As discussed in earlier section index policy is not applicable to Model $1,$ however we can use Monte Carlo rollout policy. Here, arm is selected based on state-action value estimate obtained using fixed Rollout policy $\phi$ that selects an arm at each time step. Note that we are directly applying this policy to RMAB. 

Detail of rollout policy is as follows. There are $L$ trajectories simulated for a fixed horizon length $H$ using a known transition and reward model. Along each trajectory, a fixed policy $\phi$ is employed according to which one arm  is played at each time step from $N$ arms. The  information obtained from a single trajectory upto horizon length $H$ is
\begin{eqnarray} 
\{ \pi_{t,j,l},a_{t,j,l}, r_{t,j,l}^{\phi}\}_{j=1,t=1}^{ N,  H}
\end{eqnarray} 
under policy $\phi.$ Here, $l$ denotes a trajectory, the belief for arm $j$ is $\pi_{t,j,l} \in \Pi(\mathcal{S}),$ action of arm $j$ is $ a_{t,j,l} \in \mathcal{A},$ moreover it has constraint $\sum_{j=1}^{N}  a_{t,j,l} =1.$ $r_{t,j,l}^{\phi}$ is reward from arm $j$ under policy $\phi.$
The value estimate of  trajectory $l$ starting from belief state $\pi = (\pi_{1}, \cdots, \pi_N),$ and $\pi_j \in \Pi(\mathcal{S})$ for $N$ arms and initial action $\alpha \in \{1,2, \cdots,N \}$ and  is   
$	Q_{H,l}^{\phi}(\pi, \alpha) = \sum_{h=1}^{H} \beta^{h-1} r_{h,l}^{\phi} 
= \sum_{h=1}^{H} \beta^{h-1} r(\pi_{h,l},\alpha_{h,l}, \phi).$
Then, the value estimate for state $\pi$ and action $a$ over $L$ trajectories under policy $\phi$ is 
\begin{eqnarray*}
	\widetilde{Q}_{H,L}^{\phi}(\pi, \alpha) = \frac{1}{L}\sum_{l=1}^{L}  Q_{H,l}^{\phi}(\pi, \alpha).
\end{eqnarray*} 
We use  myopic (greedy) policy as base policy $\phi$ that is implemented for a trajectory.  One step policy improvement is performed, and the optimal action  is selected according follow rule. 

\begin{eqnarray}
j^*(\pi) = \arg \max_{1 \leq j \leq N} \left[ r(\pi, \alpha= j) + \beta \widetilde{Q}_{H,L}^{\phi}(\pi, \alpha = j) \right].
\end{eqnarray}
In each time step, an arm is  played based on the above rule.  Detailed discussion on rollout policy for multi-action RMAB and fully observable state is given in \cite{Meshram2020}. 
In next section we present  numerical examples using Monte Carlo rollout policy.

\section{Numerical Results and Discussion}
\label{sec:numerical-discussion}

We describe  three numerical examples that demonstrate  the performance of index policy, myopic policy and Monte Carlo rollout policy. In the myopic policy, the arm with highest immediate expected payoff is played at each time step. In index policy, the arm with highest index is played. 

We present first numerical example for  model $1.$ We use following parameters. The number of arms $N=15,$ number of states $n=4,$ discount parameter $\beta = 0.95,$ number of message $K =2$ and binary reward is considered for each state. Assume that the transition probabilities and observation probabilities are know. As the states are not observable at all in this model, we do not make assumption on  transition probabilities,  i.e. $\mathrm{TP}_2$ order.   We compare Monte Carlo rollout policy and myopic policy. We use number of horizon $H=5$ and number of trajectories $L=100.$ 
We plot iteration vs discounted cumulative reward. We  observe from Fig.~\ref{plots:Myopic-MC-Model1} that
Monte Carlo rollout policy performs better than myopic policy up to $25\%.$ Though rollout policy is computationally expensive it has advantages in terms of higher cumulative reward.  

\begin{figure}
	\begin{center}
			\includegraphics[width=0.9\columnwidth]{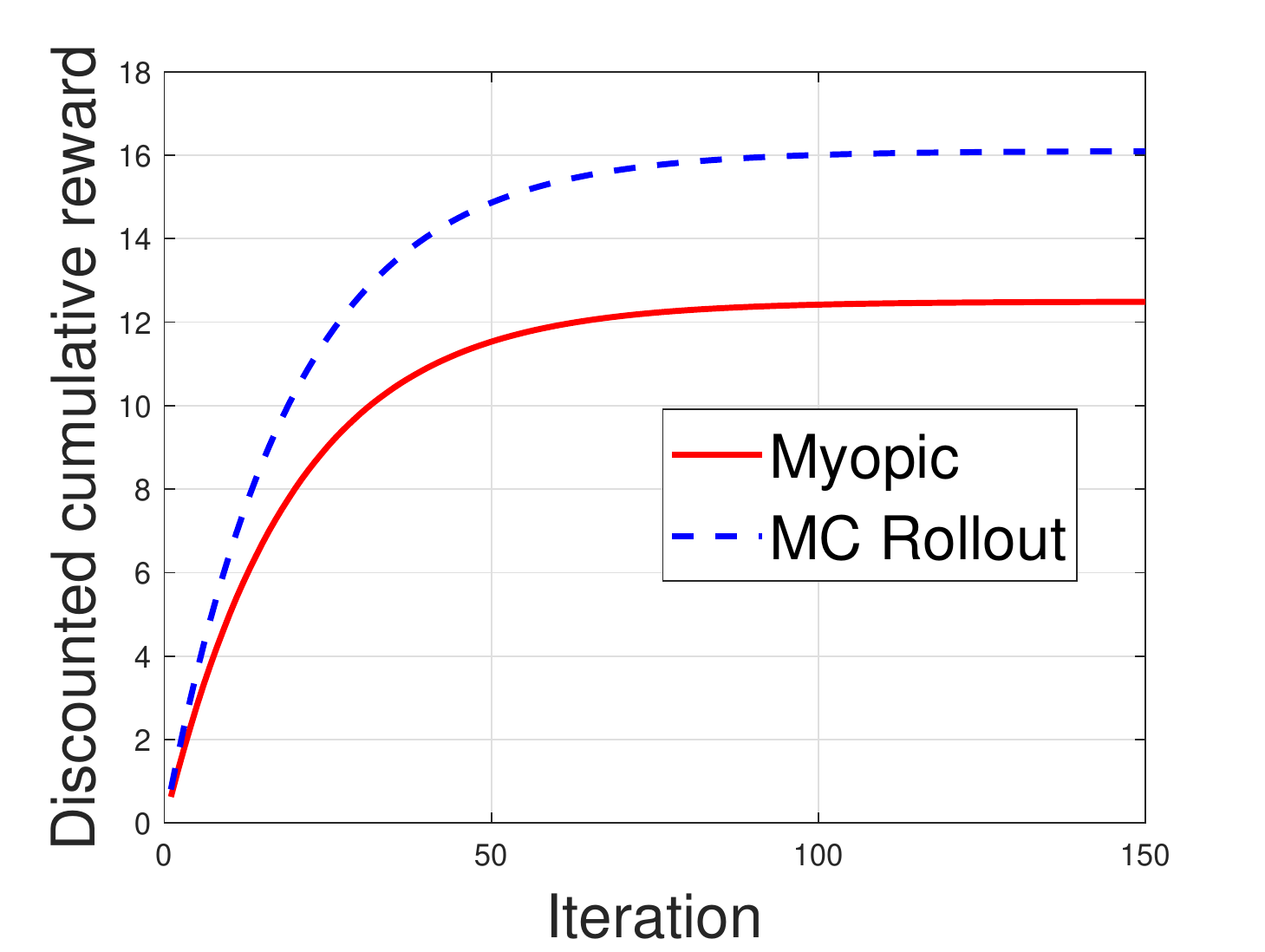}
	\end{center}
	\caption{Model $1:$ Myopic vs Monte Carlo rollout policy}
	\label{plots:Myopic-MC-Model1}
\end{figure}
 
In our second example is for model $2,$ where we consider number of arms $N=5,$ number of states $n=4,$ $\beta = 0.95,$ $K=2$ and binary reward is obtained from each state after play of arm and no reward is obtained after not playing of arm. In this example we compare index policy and myopic policy. We note that index computation is performed using Monte Carlo rollout policy, where we use $H=5.$ We observe from Fig.~\ref{plots:Myopic-Index-Model2} that myopic policy performs better than approximate index policy based algorithm. Myopic performs better by  $5\%.$  This difference is due to approximation in index computation. 

\begin{figure}
	\begin{center}
		\includegraphics[width=0.9\columnwidth]{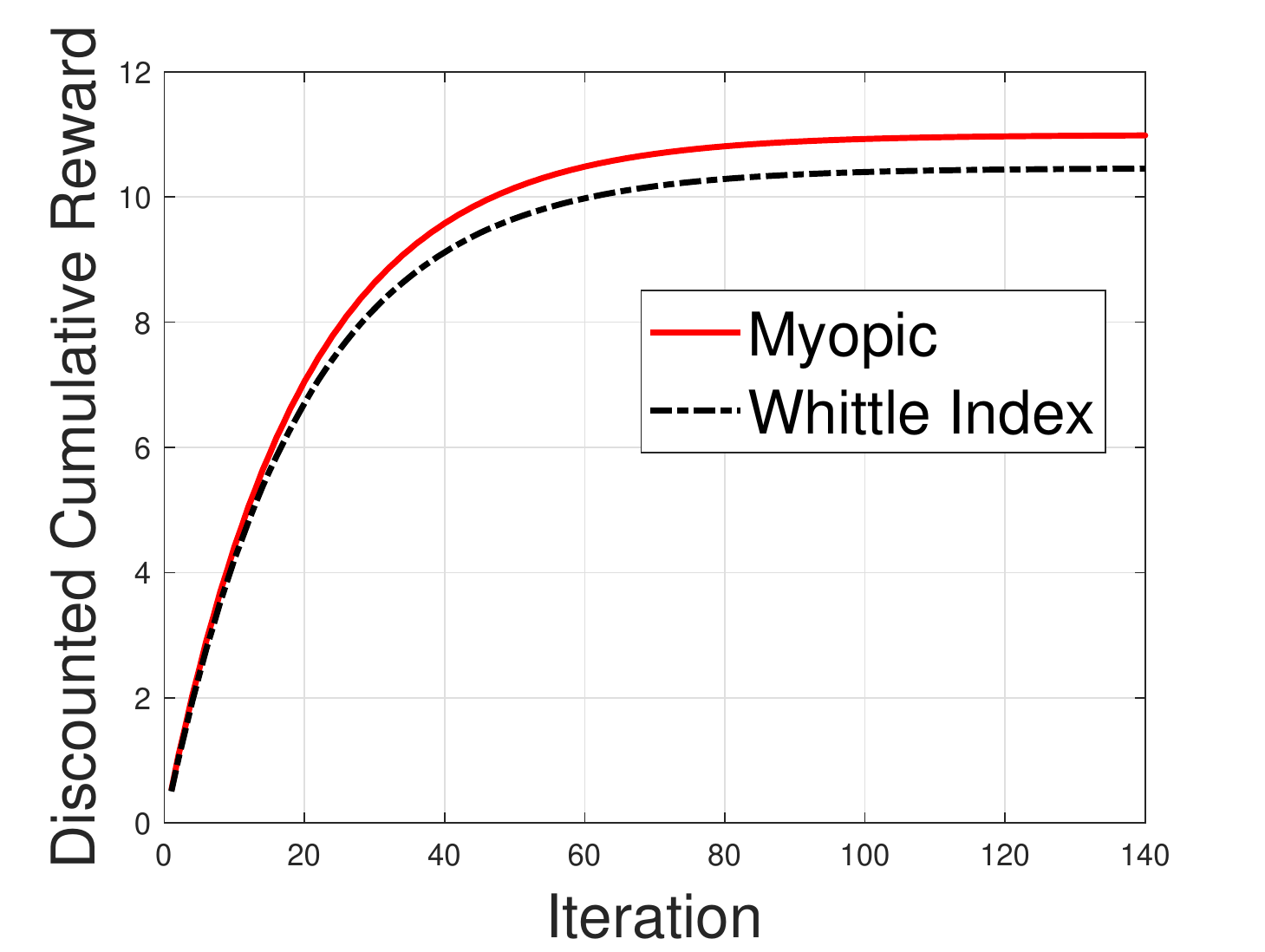}
	\end{center}
	\caption{Model $2:$ Myopic vs Approximate index policy}
	\label{plots:Myopic-Index-Model2}
\end{figure}

In our third example, we present numerical example for model $3.$ Here, Whittle index formula is explicitly available. We compare myopic policy and Whittle index policy for $N=15,$ $n=4$ and discount parameter $\beta = 0.95.$ We observe from Fig.~\ref{plots:Myopic-Index-Model3} that Whittle index policy performs poor that myopic policy. This is due to myopic policy plays only a fixed arm, $3$ for all times whereas Whittle index policy plays more than one arm more frequently based on index. In this example it suggest Whittle index policy is not optimal but it is fair and plays other arms as well.

\begin{figure}
	\begin{center}
		\includegraphics[width=0.9\columnwidth]{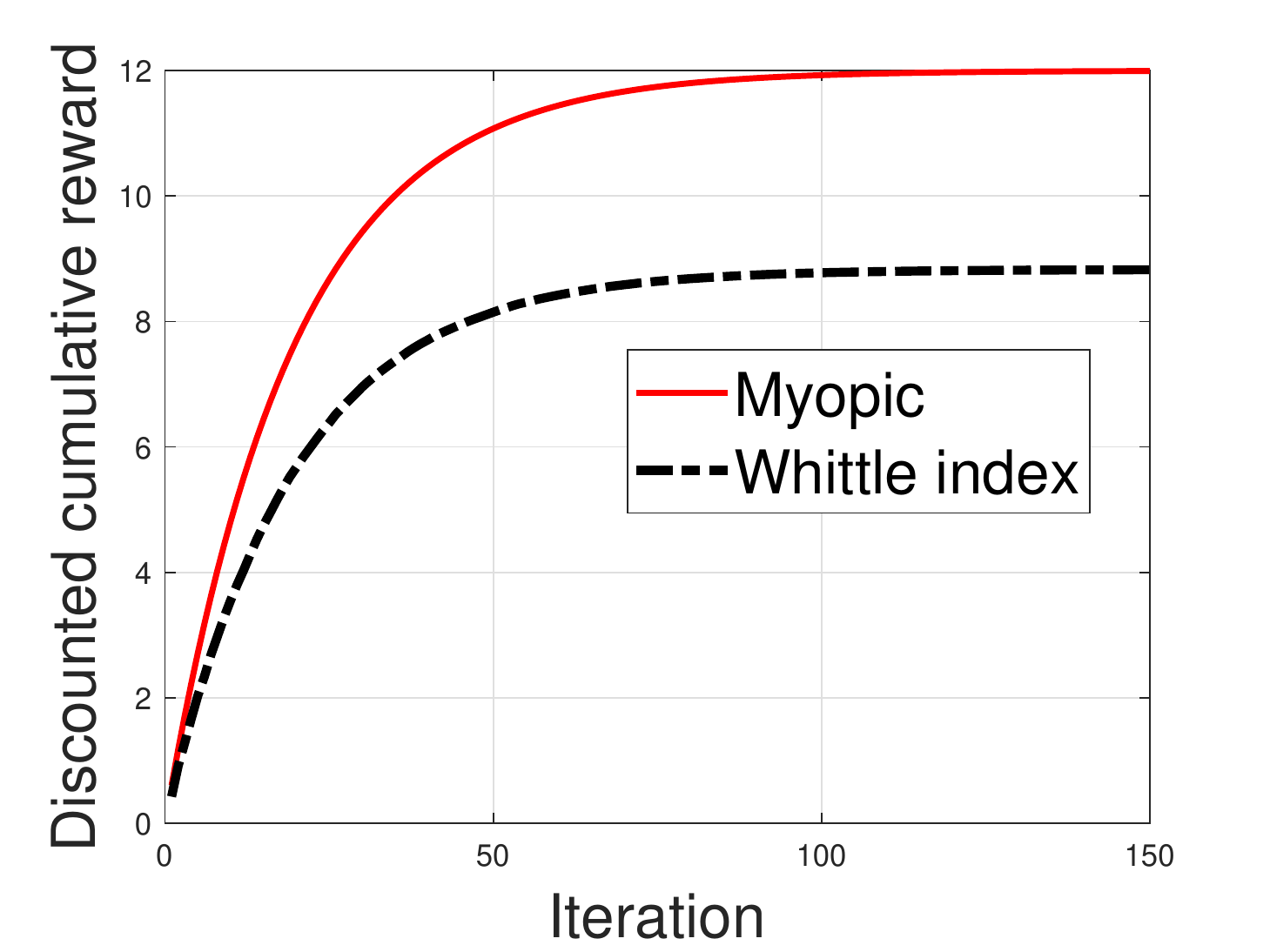}
	\end{center}
	\caption{Model $3:$ Myopic vs Whittle index policy}
	\label{plots:Myopic-Index-Model3}
\end{figure}

\section{Concluding Remarks}

In this paper we studied partially observable restless multi-armed bandits. We considered three different models based on information observable to decision maker.  

From numerical examples, it suggests that application of directly Monte Carlo rollout policy on restless multi-armed bandits can have advantages over myopic policy.  In general, an index policy for multi state partially observable models need not be optimal. We observed that Whittle index policy need not be optimal even we have index formula. A simple rollout policy is  competitive to myopic policy when no index formula is available.  

This opens interesting future direction of work on  MC rollout policy for other partially observable models when indexability and index computations are infeasible.

\bibliographystyle{IEEE}

\bibliography{restless-bandits}

\appendix


\subsection{Proof of Lemma~\ref{lemma-threshold-policy}}
\label{app:lemma-threshold-policy}
We now define $f$ for finite horizon as follows. 
\begin{eqnarray}
f(\pi,V^*_t) = g(\pi,a, V_t^*) - g(\pi,a^{\prime}, V_t^*) & \mbox{$a \geq a^{\prime},$ $a, a^{\prime} \in \mathcal{A}$ }
\label{eqn:diff-value-f}
\end{eqnarray}
We next show a threshold-type policy result and to claim this result, we require to show that $f(\pi, V^*_t)$ is nondecreasing in $\pi \in  \Pi(S).$ This property also referred to as submodularity of function. Even though optimal value function $V_t^*(\pi)$ is monotone in $\pi,$ we can not say about this difference for  model $1.$  To see this, we substitute value of $g$ in Eqn.~\eqref{eqn:diff-value-f}, then 
\begin{eqnarray}
f(\pi,V^*_t) = \sum_{i=1}^{n} \pi_i (r(i,a) -r(i,a^{\prime})) +  \nonumber \\
\beta \sum_{k \in O} \sigma(k~|~\pi,a) V_t^*(\Gamma(\pi,a,k)) - \nonumber \\
\beta \sum_{k \in O} \sigma(k~|~\pi,a^{\prime}) V_t^*(\Gamma(\pi,a^{\prime},k))
\label{eqn:diff-value-f-subst}
\end{eqnarray} 
Note that monotonicity of value function, we can say that term $1$ and term $2$ in  Eqn.~\eqref{eqn:diff-value-f-subst} is monotone but third term has  negative sign, which  introduces difficulty for threshold policy behavior.  

But in case of  model $2$ and $3,$ we can claim threshold policy result. Under structural assumption on model,  i.e., $\Gamma(\pi,a^{\prime},k) = e_{i}$ where $e_i$ is the unit vector of dimension $n$ with $1$ at $i$th position and zero at remaining position.   This simplifies the Eqn,~\eqref{eqn:diff-value-f-subst} as follows.
\begin{eqnarray}
f(\pi,V^*_t) = \sum_{i=1}^{n} \pi_i (r(i,a) -r(i,a^{\prime})) +  \nonumber \\
\beta \sum_{k \in O} \sigma(k~|~\pi,a) V_t^*(\Gamma(\pi,a,k)) - \nonumber \\
\beta  V_t^*(e_i)
\label{eqn:diff-value-f-subst-2}
\end{eqnarray} 
Now observe that third term is just constant and hence we can now  claim the monotonicity of $f(\pi,V^*_t)$ in $\pi$ under assumptions in Lemma~\ref{lemma:monotonicity}. This proves the threshold policy result. 

\qed 

\subsection{Proof of Lemma~\ref{lemma:index-comp-model3}}
\begin{itemize}
	\item We first derive index for  $e_{m_1} \in U_1(W)$  and $e_{m_2} \in U_0(W).$ We define the action value function $V_1(\pi) = V(\pi, a =1)$ and 
	  $V_0(\pi) = V(\pi, a =0).$
	\begin{eqnarray*}
	V(e_{m_1}) &=& V_1(e_{m_1}) \\
	V_1(e_{m_1}) &=& r(m_1,1) + \beta  V_1(e_{m_1}) .
	\end{eqnarray*} 
   Thus 
   \begin{eqnarray*}
   V(e_{m_1}) = \frac{r(m_1,1)}{1-\beta}.
   \end{eqnarray*}
  The action value function for action $1$ with belief $\pi$ is 
  \begin{eqnarray*}
  V_1(\pi) &=& \sum_{j=1}^{n} r(j,1) + \beta V(e_{m_1}) \\ 
   &=& \sum_{j=1}^{n} r(j,1) +  \beta \frac{r(m_1,1)}{1-\beta}.
  \end{eqnarray*}
 We now obtain the $V_0(\pi).$ 
 \begin{eqnarray*}
 V_0(e_{m_2}) &=& W + r(m_2,0) + \beta V(e_{m_2}) \\
 &=& \frac{W+ r(m_2,0)}{1- \beta}.
 \end{eqnarray*}
 \begin{eqnarray*}
 V_0(\pi) = W + \sum_{j=1}^{n} r(j,0 )\pi(j) + \beta V(e_{m_2}) \\
 = W + \sum_{j=1}^{n} r(j,0 )\pi(j) + \beta \left( \frac{W+ r(m_2,0)}{1- \beta}\right)
 \end{eqnarray*}
 From theshold policy we know that at $\pi$ we have $V_1(\pi) = V_0(\pi).$ After  equating and solving we get 
 \begin{eqnarray*}
 W(\pi) =(1-\beta) \left(\sum_{j=1}^{n} \left[r(j,1) - r(j,0)\right] \pi(j) \right) + \\
 \beta \left[ r(m_1,1) - r(m_2,0) \right] .
 \end{eqnarray*}
This is an index formula.
\item We now derive the index when $e_{m_1},e_{m_2} \in U_1(W).$ 
We obtain value function expression first. 
\begin{eqnarray*}
V(e_{m_1}) &=& V_1(e_{m_1}) \\
V_1(e_{m_1}) &=& r(m_1,1) + \beta V(e_{m_1}) \\
V(e_{m_1}) &=& \frac{r(m_1,1)}{1-\beta},
\end{eqnarray*}
and 
\begin{eqnarray*}
	V(e_{m_2})  &=& r(m_2,1) + \beta V(e_{m_1}) \\
	V(e_{m_1}) &=& r(m_2,1) + \beta \frac{r(m_1,1)}{1-\beta}. 
\end{eqnarray*}
Then 
\begin{eqnarray*}
V_1(\pi) &=& \sum_{j=1}^{n} r(j,1) \pi(j) + \beta V(e_{m_1}) \\
&=& \sum_{j=1}^{n} r(j,1) \pi(j) + \beta \frac{r(m_1,1)}{1-\beta}
\end{eqnarray*}
and 
\begin{eqnarray*}
V_0(\pi) =  W + \sum_{j=1}^{n} r(j,0) \pi(j) + \beta V(e_{m_2}) \\
= W + \sum_{j=1}^{n} r(j,0) \pi(j) + \beta \left[ r(m_2,1) + \beta \frac{r(m_1,1)}{1-\beta} \right]
\end{eqnarray*}
After equating $V_1(\pi)$ and $V_0(\pi)$ and solving for $W,$ we have 
\begin{eqnarray*}
W(\pi)  = \sum_{j=1}^{n} \left[r(j,1) - r(j,0) \right] \pi(j) + \\
\beta [r(m_1,1) - r(m_2,1)].
\end{eqnarray*}

\item We now derive index formula when $e_{m_1},e_{m_2} \in U_0(W).$ 
We obtain
\begin{eqnarray}
V(e_{m_2}) &=& V_0(e_{m_2}) \\ 
 &=& W + r(m_2, 0) + \beta V(e_{m_2}) \\
 &=& \frac{W + r(m_2, 0)}{1-\beta}.
\end{eqnarray}
 
\begin{eqnarray*}
V(e_{m_1}) = W+ r(m_1,0) + \beta  V(e_{m_2}).
\end{eqnarray*}
Then 
\begin{eqnarray*}
V_1(\pi) = \sum_{j=1}^{n} r(j,1) \pi(j) +  \beta V(e_{m_1}) \\
V_2(\pi) = W+ \sum_{j=1}^{n} r(j,1) \pi(j) +  \beta V(e_{m_2}) 
\end{eqnarray*}

After equating and solving these equations for $W,$ we obtain 
\begin{eqnarray*}
W(\pi) = \sum_{j=1}^{n} \left[r(j,1) - r(j,0)\right] \pi(j) + \\
\beta \left[ r(m_1,0) -r(m_2,0)\right].
\end{eqnarray*}

\end{itemize}
\qed 

\subsection{Proof of Theorem~\ref{thm:concentration-MC}}
\label{app:thm:concentration-MC}

%
%

Initial belief is $\pi_0 = \pi.$ The immediate expected reward at time $t$ for action $a=1$ is   $r(\pi_t, a = 1) = \sum_{i \in S} \pi_t(i) r(i,a=1).$ We have assumed $ 0<r(i,a=1) \leq  1$ Then immediate expected reward for action $a=1$ is bounded, and $ 0< r(\pi_t, a = 1) \leq 1$ and here $R_{\max} =1$ and $R_{\min} =0.$ Similarly the immediate expected reward for action $a =0$ is   $0<r(\pi_t, a = 0) + W < 1$ for any $\pi_t.$ We assume that $0 \leq W \leq 1.$

We suppose that $V_{\phi}(\pi, a, W)$ is the value function for an arm  under policy $\phi,$ with initial state $\pi,$ action $a$ and subsidy $W.$ 

Note that $\{Q_{h,l}^{\phi}(\pi, a, W)\}_{l=1}^L$ are independent random trajectories generated using policy $\phi$ for horizon length $H$ starting from state $\pi,$ action $a$ and subsidy $W.$
Thus, for each trajectory $l,$ we have $Q_{l,H}^{\phi}(\pi,a,W) \in \left[0, \frac{(1-\beta^{H})}{1-\beta}\right].$ This is due to reward is bounded in each steps by $R_{\max}=1.$  Let $z = \frac{(1-\beta^{H})}{1-\beta}.$

Define the action value function under policy $\phi$ is $Q^{\phi}(\pi,a,W)$  for starting belief $\pi$ and action $a.$ This is discounted cumulative expected reward for infinite horizon problem. 
Thus we utilize  the Hoeffding inequality~\cite{Hoeffding63}  for independent random bounded random variables. We have following inequality. 

\begin{eqnarray*}
	\prob{\bigg\vert Q^{\phi}(\pi,a,W) - \frac{1}{L} \sum_{l=1}^{L} Q_{H,l}^{\phi}(\pi, a, W) \bigg\vert > \epsilon }  \leq \\
	2\exp\left(- \frac{2 L^2 \epsilon^2 }{L z^2}\right)   
\end{eqnarray*}  

Thus RHS of preceding term is 
\begin{eqnarray*}
	2\exp\left(- \frac{2 L^2 \epsilon^2 }{L z^2}\right)    = 2\exp\left(- \frac{2 L \epsilon^2 }{ z^2}\right)  
\end{eqnarray*}
We want this term to $\delta$ Hence
\begin{eqnarray*}
	2\exp\left(- \frac{2 L \epsilon^2 }{ z^2}\right)   = \delta 
\end{eqnarray*}
After rearranging terms, we have 
\begin{eqnarray*}
	\epsilon = \sqrt{\frac{z^2}{2L} \log\left(2/\delta\right)} 
\end{eqnarray*}
Setting $\delta = \frac{2}{H^2}$ we get following inequality with probability $1- \frac{2}{H^2}$
\begin{eqnarray*}
	\bigg\vert Q^{\phi}(\pi,a,W) - \frac{1}{L} \sum_{l=1}^{L} Q_{H,l}^{\phi}(\pi, a, W) \bigg\vert \leq  \sqrt{\frac{z^2 \log H}{L} } 
\end{eqnarray*}
We know that 
\begin{eqnarray*}
	V_{\phi}(\pi, W) = \max_{a \in \{0,1\}} Q^{\phi}(\pi,a,W)	
\end{eqnarray*}
Thus we can have following inequality with high probabiliy $1-\frac{2}{H^2}$ for sufficiently large horizon $H$ and $L > \widetilde{L}$  
\begin{eqnarray*}
	\bigg\vert V_{\phi}(\pi, a, W) - \widetilde{V}_{\phi,H,L}(\pi,a,W) \bigg\vert \leq  \sqrt{\frac{z^2 \log H}{L} } 
\end{eqnarray*}
for $a \in \{0,1\}.$ 
This completes the proof. 

\qed

%

\end{document}